\documentclass{article}


\usepackage[accepted]{icml2018}

\usepackage{amsfonts} 
\usepackage{amsmath}
\usepackage{amssymb}
\usepackage{amsthm}
\usepackage{array}
\usepackage{booktabs}
\usepackage{color}
\usepackage{enumerate}
\usepackage{fancyhdr}
\usepackage{lineno,hyperref}
\usepackage{graphicx}
\usepackage{relsize}
\usepackage{microtype}
\usepackage{subfigure}
\usepackage{tikz}

\usepackage{enumitem}
\setlist{
topsep=-0.0em,   
itemsep=-0.0em, 
leftmargin=1.3pc, 
 }

\newtheorem{theorem}{Theorem}

\newtheorem{lemma}{Lemma}

\newtheorem{claim}{Claim}
\newcommand{\Tr}{\mathrm{Tr}}

\newcommand{\fro}{\mathrm{fro}}

\newcommand{\real}{\mathbb{R}}
\newcommand{\Loss}{\mathcal{L}}

\newcommand{\ww}{\mathbf{w}}

\newcommand{\xx}{\mathbf{x}}

\newcommand{\uu}{\mathbf{u}}

\newcommand{\yy}{\mathbf{y}}

\newcommand{\0}{\mathbf{0}}

\newcommand{\veps}{\varepsilon}
\newcommand{\R}{\mathbb{R}}

\newcommand{\M}{\mathbb{M}}

\DeclareMathOperator*{\argmin}{arg\,min}

\icmltitlerunning{Deep Linear Networks with Arbitrary Loss: All Local Minima Are Global}

\begin{document}

\twocolumn[
\icmltitle{Deep Linear Networks with Arbitrary Loss: All Local Minima Are Global}
\icmlsetsymbol{equal}{*}

\begin{icmlauthorlist}
\icmlauthor{Thomas Laurent}{equal,lmu}
\icmlauthor{James  H. von Brecht}{equal,lbs}
\end{icmlauthorlist}

\icmlaffiliation{lmu}{Department of Mathematics, Loyola Marymount University, Los Angeles, CA 90045, USA}
\icmlaffiliation{lbs}{Department of Mathematics and Statistics, California State University, Long Beach, Long Beach, CA 90840, USA}

\icmlcorrespondingauthor{Thomas Laurent}{tlaurent@lmu.edu}
\icmlcorrespondingauthor{James H. von Brecht}{james.vonbrecht@csulb.edu}


\vskip 0.3in
]

\printAffiliationsAndNotice{\icmlEqualContribution} 

\begin{abstract}
We consider deep linear networks with arbitrary convex differentiable  loss. We provide a short and elementary proof of the fact that all local minima are global minima if the hidden layers are either 1) at least as wide as the input layer, or 2) at least as wide as the output layer. This result is the strongest possible in the following sense: If the loss is convex and Lipschitz but not differentiable then deep linear networks \textbf{can} have sub-optimal local minima.
\end{abstract}

\section{Introduction}
Deep linear networks (DLN) are neural networks that have multiple hidden layers but have no nonlinearities between layers. That is, for given data points  $\{\xx^{(i)}\}_{i=1}^N$ the outputs of such networks are computed via a series
$$
\hat \yy^{(i)} = W_L  W_{L-1}\cdots W_1 \xx^{(i)}
$$
of matrix multiplications. Given a target $\yy^{(i)}$ for the $i^{\rm th}$ data point and a pairwise loss function $\ell(\hat \yy^{(i)}, \yy^{(i)}),$ forming the usual summation
\begin{equation} \label{loss}
\Loss(W_1,\ldots, W_L) =\frac{1}{N} \sum_{i=1}^{N} \ell(\hat \yy^{(i)}, \yy^{(i)})
\end{equation}
then yields the total loss.

Such networks have few direct applications, but they frequently appear as a class of toy models used to understand the loss surfaces of deep neural networks \cite{saxe2013exact,kawaguchi2016deep,  lu2017depth, hardt2016identity}. For example, numerical experiments indicate that DLNs exhibit some behavior that resembles the behavior of deep nonlinear networks during training \cite{saxe2013exact}. Results of this sort provide a small piece of evidence that DLNs can provide a decent simplified model of more realistic networks with nonlinearities.

From an analytical point-of-view, the simplicity of DLNs allows for a rigorous, in-depth study of their loss surfaces. These models typically employ a convex loss function $\ell(\hat \yy,\yy)$, and so with one layer (i.e. $L=1$) the loss $\Loss(W_1)$ is convex and the resulting optimization problem \eqref{loss} has no sub-optimal local minimizers.
With multiple layers (i.e. $L\ge 2$) the loss $\Loss(W_1, \ldots, W_L)$ is not longer convex, and so the question of paramount interest concerns whether this addition of depth and the subsequent loss of convexity creates sub-optimal local minimizers. Indeed, most analytical treatments of DLNs focus on this question.

We resolve this question in full for arbitrary convex differentiable loss functions. Specifically, we consider deep linear networks satisfying  the two following hypotheses:
\begin{enumerate}[label=(\roman*)]
\item The loss function $\hat \yy \mapsto \ell(\yy,\hat\yy)$ is convex and differentiable.
\item The thinnest layer is either the input layer or the output layer.
\end{enumerate}
Many networks of interest satisfy both of these hypotheses. The first hypothesis (i) holds for nearly all network criteria, such as the mean squared error loss, the logistic loss or the cross entropy loss, that appear in applications. In a classification scenario, the second hypothesis (ii) holds whenever each hidden layer has more neurons than the number of classes. Thus both hypotheses are often satisfied when using a deep linear network \eqref{loss} to model its nonlinear counterpart. In any such situation we resolve the deep linear problem in its entirety. 
\begin{theorem}
If hypotheses (i) and (ii) hold then \eqref{loss} has no sub-optimal minimizers, i.e. any local minimum is  global.
\end{theorem}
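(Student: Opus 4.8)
The plan is to reduce the non-convex problem \eqref{loss} to the convex one-layer problem by a perturbation argument at any local minimum. Write $W = W_L \cdots W_1$ for the product matrix and let $F(W) = \frac{1}{N}\sum_i \ell(W\xx^{(i)}, \yy^{(i)})$, a convex differentiable function of the single matrix $W \in \R^{d_L \times d_0}$. Since $\Loss(W_1,\ldots,W_L) = F(W_L\cdots W_1)$, a global minimum of \eqref{loss} is attained exactly when $W$ can be chosen to globally minimize $F$ over all matrices of rank at most $r := \min_k d_k$; and by hypothesis (ii) the bottleneck is $d_0$ or $d_L$, so $r = \min(d_0, d_L)$ and every matrix $W \in \R^{d_L \times d_0}$ is automatically of rank at most $r$. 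Hence the global value of \eqref{loss} equals $\min_W F(W)$, and it suffices to show that if $(W_1,\ldots,W_L)$ is a local minimum of $\Loss$ then $W = W_L\cdots W_1$ is a global minimum of the convex function $F$, i.e. $\nabla F(W) = 0$.

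The key step is to produce, from the local-minimality of $(W_1,\ldots,W_L)$, an admissible direction in which the product $W$ can be moved. Concretely, fix a small scalar $t$ and perturb a single factor, say replace $W_1$ by $W_1 + t M_1$ for an arbitrary $M_1 \in \R^{d_1 \times d_0}$; the product becomes $W + t\,(W_L\cdots W_2) M_1$, so along this curve $\Loss$ agrees with $F\bigl(W + t\,(W_L\cdots W_2)M_1\bigr)$, and local minimality forces $\langle \nabla F(W),\, (W_L\cdots W_2)M_1\rangle = 0$ for all $M_1$, i.e. $(W_L\cdots W_2)^{\!\top}\nabla F(W) = 0$. Symmetrically, perturbing the last factor $W_L$ gives $\nabla F(W)\,(W_{L-1}\cdots W_1)^{\!\top} = 0$. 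If either $W_L\cdots W_2$ has full row rank $d_1 \ge d_0$ (when the input layer is thinnest) or $W_{L-1}\cdots W_1$ has full column rank $d_{L-1}\ge d_L$ (when the output layer is thinnest), we would immediately get $\nabla F(W)=0$ and be done. The difficulty is that at a local minimum these intermediate products need not have full rank — the factors can be degenerate — so this direct argument fails as stated.

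The main obstacle, then, is handling rank-deficient factors, and I expect the heart of the proof to be a lemma showing that one can perturb \emph{several} factors simultaneously, keeping $\Loss$ at its local-minimum value, so as to move $W$ in an arbitrary direction. The idea is: given any target direction $M \in \R^{d_L \times d_0}$, one wants matrices $M_1,\ldots,M_L$ (depending on $t$) such that $(W_L + tM_L)\cdots(W_1 + tM_1) = W + tM + \bigo(t^2)$ and such that each perturbed factor stays near $W_k$. When some $W_k$ is rank-deficient this is possible precisely because there is ``room'' in the narrow layer: using that $d_0$ (or $d_L$) is the smallest dimension, one can factor any $M$ through the layers — e.g. write $M = A B$ with $A \in \R^{d_L \times d_0}$, $B = \I_{d_0}$, absorb the needed rank into the thin layer, and build the $M_k$ so that the first-order term of the product telescopes to $M$ while higher-order corrections are controlled. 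Once such perturbations are available, local minimality of $\Loss$ gives $\langle \nabla F(W), M\rangle \ge 0$ for this and (by $M \mapsto -M$, or by choosing the perturbation for $-M$) also $\le 0$, hence $\nabla F(W) = 0$; convexity of $F$ then yields that $W$ is a global minimizer of $F$, and therefore $(W_1,\ldots,W_L)$ achieves the global value of \eqref{loss}. I would carry out the argument by first treating the generic full-rank case to expose the mechanism, then isolating and proving the perturbation lemma for the degenerate case as the technical core, and finally assembling these with the convexity of $F$.
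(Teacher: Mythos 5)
Your overall frame --- reduce to showing $\nabla F(W)=0$ at the product matrix of a local minimizer and then invoke convexity --- matches the paper, as does the easy case where a truncated product has trivial kernel. But the core of your plan for the degenerate case rests on a lemma that is false. You propose to find $M_1,\ldots,M_L$ with $(W_L+tM_L)\cdots(W_1+tM_1)=W+tM+\bigo(t^2)$ for an \emph{arbitrary} direction $M\in\M_{d_L\times d_0}$. The first-order term of this product is $\sum_{k=1}^{L}(W_L\cdots W_{k+1})\,M_k\,(W_{k-1}\cdots W_1)$, a sum of ranges of linear maps sandwiched between the \emph{existing} truncated products; when those are rank-deficient this span is a proper subspace of $\M_{d_L\times d_0}$. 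For instance, with $L=2$ and $W_1=0,\ W_2=0$ the span is $\{\0\}$; with $d_0=d_1=1$, $d_2=2$, $W_1=0$ and $W_2=(1,0)^{T}$ it is the one-dimensional subspace $\{(m,0)^{T}: m \in \R\}$. Configurations of this kind can be genuine local minima (the paper's concluding section exhibits exactly the second one), so you cannot rule them out. Nor can you fall back on the full, all-orders reachable set being a neighborhood of $W$: in that same two-layer example the reachable products lie in a thin two-sided cone around the $x$-axis, not a neighborhood of the origin. ``Absorbing rank into the thin layer'' does not address this, because the obstruction sits in the fixed factors $W_L\cdots W_{k+1}$ and $W_{k-1}\cdots W_1$, not in the perturbations you are free to choose.

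The paper resolves the degenerate case by an opposite mechanism. Let $k_*$ be the first index for which $\hat W_{k,-}:=\hat W_k\cdots\hat W_1$ has a nontrivial kernel, and for $k\ge k_*$ let $\hat\uu_{k}$ be a left singular vector of $\hat W_{k,-}$ associated with a zero singular value, so that $\hat\uu_k^T\hat W_{k,-}=\0$. Perturbing $\hat W_k\mapsto\hat W_k+\delta_k\ww_k\hat\uu_{k-1}^{T}$ for all $k>k_*$ leaves the product \emph{unchanged}; every such perturbation is therefore again a local minimizer and satisfies the first-order conditions. Differencing these conditions in the free parameters $\delta_k,\ww_k$ and right-multiplying by $\hat\uu$ vectors strips the factors off the identity $\nabla f(\hat A)^T\tilde W_L\cdots\tilde W_{k_*+1}=0$ one at a time, and induction yields $\nabla f(\hat A)=0$. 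In short: rather than enlarging the set of first-order directions in which the product can move (impossible in general), the paper enlarges the set of local minimizers at which the first-order conditions hold. You would need to replace your perturbation lemma with an argument of this kind.
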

We provide a short, transparent proof of this result. It is easily accessible to any reader with a basic understanding of the singular value decomposition, and in particular, it does not rely on any sophisticated machinery from either optimization or linear algebra. Moreover, this theorem is the strongest possible in the following sense ---
\begin{theorem}
There exists a convex, Lipschitz but not differentiable function $\hat \yy \mapsto \ell(\yy,\hat\yy)$ for which \eqref{loss} has sub-optimal local minimizers.
\end{theorem}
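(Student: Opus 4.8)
The plan is to exhibit an explicit network, with \emph{equal} layer widths so that hypothesis (ii) holds and \emph{no} rank-restricting bottleneck is present, for which the $\ell_1$ loss produces a sub-optimal local minimum; this isolates the effect of non-differentiability. Concretely, I take $L=2$ with $d_0=d_1=d_2=2$, the convex $1$-Lipschitz non-differentiable loss $\ell(\hat\yy,\yy)=\|\hat\yy-\yy\|_1$, two data points $\xx^{(1)}=\ee_1$, $\xx^{(2)}=\ee_2$, and targets $\yy^{(1)}=(0,1)^\top$, $\yy^{(2)}=(0,0)^\top$. Writing $M=W_2W_1$ for the product matrix, the two data points read off the two columns of $M$, so that $\Loss=\tfrac12\bigl(|M_{11}|+|M_{21}-1|+|M_{12}|+|M_{22}|\bigr)$. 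The global minimum is $\Loss=0$, attained whenever $M=\bigl(\begin{smallmatrix}0&0\\1&0\end{smallmatrix}\bigr)$. I claim the point $W_1^\star=W_2^\star=\bigl(\begin{smallmatrix}0&1\\0&0\end{smallmatrix}\bigr)$, at which $M^\star=0$ and $\Loss=\tfrac12$, is a sub-optimal local minimizer.

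To verify local minimality I would parametrize $W_1=\bigl(\begin{smallmatrix}a&b\\c&d\end{smallmatrix}\bigr)$ and $W_2=\bigl(\begin{smallmatrix}e&f\\g&h\end{smallmatrix}\bigr)$ near the nilpotent configuration, so that $b,f\approx1$ and $a,c,d,e,g,h\approx0$. Since $M_{21}$ stays small near the point, $|M_{21}-1|=1-M_{21}$ there, and $\Loss\ge\tfrac12$ reduces to the single inequality $M_{21}\le |M_{11}|+|M_{12}|+|M_{22}|$. The key is the exact algebraic identity
$$M_{21}=\frac{a}{b}\,M_{22}+\frac{h}{f}\,M_{11}-\frac{ah}{bf}\,M_{12},$$
which one checks by direct expansion of $M=W_2W_1$. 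Near the configuration the three coefficients $a/b$, $h/f$, and $ah/(bf)$ are all small, so shrinking the neighborhood makes each $\le 1$ in absolute value; this yields $|M_{21}|\le |M_{11}|+|M_{12}|+|M_{22}|$ and hence $\Loss\ge\tfrac12$ on a whole neighborhood of $(W_1^\star,W_2^\star)$.

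The content of the example, and the step I expect to be the crux, is the identity above: it encodes the fact that the product map $(W_1,W_2)\mapsto W_2W_1$ is not locally open at the nilpotent point, its image missing a full neighborhood of $M^\star$ precisely in the $M_{21}$ direction. Put differently, the only coordinate of $M$ whose target the bare product $M^\star$ fails to reach is exactly the one that cannot be increased without moving the kink-penalized coordinates $M_{11},M_{12},M_{22}$ by a comparable amount, and the $\ell_1$ kinks then force any such move to strictly increase $\Loss$. This is precisely the mechanism that differentiability forbids in Theorem 1: with a smooth loss the single profitable direction could be exploited at first order regardless of local openness, whereas the non-smooth $\ell_1$ penalty converts the failure of openness into a genuine barrier. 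The main obstacle is therefore the verification of the identity and the resulting inequality that quantify this failure of openness; once these are in hand, sub-optimality (global value $0$ versus $\tfrac12$) and the reduction $|M_{21}-1|=1-M_{21}$ are immediate bookkeeping.
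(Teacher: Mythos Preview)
Your construction is correct. The identity $M_{21}=(a/b)M_{22}+(h/f)M_{11}-(ah/bf)M_{12}$ checks out by direct expansion of $M=W_2W_1$, and since $b,f\to 1$ while $a,h\to 0$ near your nilpotent point, all three coefficients can be made $<1$ in absolute value on a neighborhood; the inequality $M_{21}\le|M_{11}|+|M_{12}|+|M_{22}|$ and hence $\Loss\ge\tfrac12$ follow. Sub-optimality is clear since $M=\bigl(\begin{smallmatrix}0&0\\1&0\end{smallmatrix}\bigr)$ is attainable and yields $\Loss=0$.

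Your route differs from the paper's in several respects. The paper works in the smallest possible dimensions ($d_0=d_1=1$, $d_2=2$, so $W_1\in\R$ and $W_2\in\R^2$) with the hand-built loss $f(x,y)=|x|+(1-y)_+-1$, and argues geometrically that the image of a neighborhood of $(\hat W_1,\hat W_2)=(0,(1,0)^T)$ under the product map lies in a two-sided truncated cone $\{|y|\le\tfrac12|x|\}$, on which $f\ge 0=f(\0)$. You instead take all layers of width $2$, use the standard $\ell_1$ loss with two data points so that the theorem statement is realized in the most literal form of \eqref{loss}, and replace the cone argument by the exact algebraic identity above. What your approach buys is a more natural loss and a direct fit to the data/target framework; what the paper's buys is lower dimensionality and a one-line geometric picture. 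Conceptually both exploit the same phenomenon you identify: the product map fails to be locally open at the chosen point precisely in the direction that the non-smooth loss would reward, so the kinks on the remaining coordinates trap you.
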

In other words, we have a (perhaps surprising) hard limit on how far ``local equals global'' results can reach; differentiability of the loss is essential.

Many prior analytical treatments of DLNs focus on similar questions. For instance, both \cite{baldi1989neural} and \cite{baldi2012complex} consider ``deep" linear networks with two layers (i.e. $L=2$) and a mean squared error loss criterion. They provide a ``local equals global'' result under some relatively mild assumptions on the data and targets. More recently,  \cite{kawaguchi2016deep} proved that deep linear networks with arbitrary number of layers and with mean squared error loss do not have sub-optimal local minima under certain structural assumptions on the data and targets. The follow-up \cite{lu2017depth} futher simplifies the proof of this result and weakens the structural assumptions. Specifically, this result shows that the loss \eqref{loss} associated with a deep linear network has no sub-optimal local minima provided all of assumptions
\begin{enumerate}[label=(\roman*)]
 \item The loss $\ell(\hat \yy^{(i)}, \yy^{(i)})= \|\yy^{(i)} - \hat \yy^{(i)}\|^2$ is the mean squared error loss criterion;
 \item The data matrix $X=[\xx^{(1)}, \ldots, \xx^{(N)}]$  has full row rank;
  \item The target matrix $Y=[\yy^{(1)}, \ldots, \yy^{(N)}]$  has full row rank;
 \end{enumerate}
are satisfied. Compared to our result, \cite{lu2017depth} therefore allows for the hidden layers of the network to be thinner than the input and output layers. However, our result applies to network equipped with any differentiable convex loss (in fact any differentiable loss $\mathcal{L}$ for which first-order optimality implies global optimality) and we do not require any assumption on the data and targets. Our proof is also shorter and much more elementary by comparison.

\section{Proof of Theorem 1}
Theorem 1 follows as a simple consequence of a more general theorem concerning real-valued functions that take as input a product of matrices. That is, we view the deep linear problem as a specific instance of the following more general problem. Let  $\M_{m \times n }$ denote  the space of $m \times n$ real matrices, and let $f : \M_{d_{L} \times d_{0} } \to \R$ denote any differentiable function that takes $d_L \times d_0$ matrices as input. For any such function we may then consider both the single-layer optimization
\begin{equation*}
\qquad \text{(P1)}\;\,\left\{\quad \begin{aligned}
&\textrm{Minimize} \;\;\, f(A) \\
& \textrm{over all  $A $ in  $\M_{d_{L} \times d_{0} }$ }
\end{aligned}
\right.
\end{equation*}
as well as the analogous problem
\begin{equation*}
\text{(P2)}\;\,\left\{ \quad \begin{aligned}
& \textrm{Minimize} \;\;\,
 f(W_L W_{L-1} \cdots  W_1 ) \\
 & \textrm{over all $L$-tuples  $(W_1, \ldots, W_L)$} \\
 &  \textrm{in $ \M_{d_{1} \times d_{0} }   \times \cdots \times \M_{d_{L} \times d_{L-1}} $}
\end{aligned}
\hspace{0.5cm} \right.
\end{equation*}
that corresponds to a multi-layer deep linear optimization. In other words, in (P2) we consider the task of optimizing $f$ over those matrices $A \in \M_{d_{L} \times d_{0} }$ that can be realized by an $L$-fold product
\begin{equation}\label{eq:param}
A = W_{L}W_{L-1}\cdots W_{1} \qquad W_{\ell} \in \M_{d_{\ell} \times d_{\ell-1} }
\end{equation}
of matrices. We may then ask how the parametrization \eqref{eq:param} of $A$ as a product of matrices affects the minimization of $f,$ or in other words, whether the problems (P1) and (P2) have similar structure. At heart, the use of DLNs to model nonlinear neural networks centers around this question. 

Any notion of structural similarity between (P1) and (P2) should require that their global minima coincide. As a matrix of the form \eqref{eq:param} has rank at most $\min\{d_0,\ldots,d_{L}\}$, we must impose the structural requirement
\begin{equation} \label{struct}
\min\{d_1,\ldots,d_{L-1}\} \geq \min\{d_{L},d_{0}\}
\end{equation}
in order to guarantee that \eqref{eq:param} does, in fact, generate the full space of $d_{L} \times d_{0}$ matrices. Under this assumption we shall prove the following quite general theorem.

\begin{theorem}\label{main_theorem}
Assume that $f(A)$ is any differentiable function and that the structural condition \eqref{struct}
holds. Then at \textbf{any} local minimizer $\big(\hat W_{1}, \ldots , \hat W_{L}\big)$ of (P2) the optimality condition
$$
\nabla f\big( \hat A \big) = 0 \qquad \qquad \hat A := \hat W_{L}\hat W_{L-1}\cdots \hat W_{1}
$$
is satisfied.
\end{theorem}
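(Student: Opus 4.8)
First I would normalize the problem. Replacing $f$ by the map $A\mapsto f(A^{T})$ transposes every layer and reverses the order of the product, which preserves the structural condition \eqref{struct} and carries local minimizers to local minimizers; so I may assume $d_{0}\le d_{L}$, i.e.\ every hidden width satisfies $d_{\ell}\ge d_{0}$. Write $\hat A=\hat W_{L}\cdots\hat W_{1}$ and, for $0\le\ell\le L$, the partial products $\hat A_{\ell}=\hat W_{\ell}\cdots\hat W_{1}$ and $\hat B_{\ell}=\hat W_{L}\cdots\hat W_{\ell+1}$ (empty products are identities), so $\hat A=\hat B_{\ell}\hat A_{\ell}$ for every $\ell$. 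Perturbing a single factor, $\hat W_{\ell}\mapsto\hat W_{\ell}+tE$, changes the product to $\hat A+t\,\hat B_{\ell}E\hat A_{\ell-1}$ \emph{exactly} (it is affine in $t$), so local minimality forces $\langle\nabla f(\hat A),\hat B_{\ell}E\hat A_{\ell-1}\rangle=0$ for all $E$, i.e.\ $\hat B_{\ell}^{T}\nabla f(\hat A)\hat A_{\ell-1}^{T}=0$ for every $\ell$. Taking $\ell=L$ gives $\nabla f(\hat A)\hat A_{L-1}^{T}=0$, so the rows of $\nabla f(\hat A)$ lie in $\ker\hat A_{L-1}$; in particular, if $\hat A_{L-1}$ has full column rank $d_{0}$ (equivalently $\ker\hat A_{L-1}=0$, since $d_{L-1}\ge d_{0}$) then $\nabla f(\hat A)=0$ and we are done. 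This ``easy case'' is also transparent directly: when $\hat A_{L-1}$ has full column rank, $E\mapsto E\hat A_{L-1}$ is onto $\M_{d_{L}\times d_{0}}$, so perturbing only the last layer already reaches every direction at $\hat A$.

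The substance is the case $\mathrm{rank}\,\hat A_{L-1}<d_{0}$. Let $k$ be the first index with $\mathrm{rank}\,\hat A_{k}<d_{0}$; then $1\le k\le L-1$, the matrix $\hat A_{k-1}$ has full column rank $d_{0}$, and the width hypothesis gives $d_{k}\ge d_{0}>\mathrm{rank}\,\hat A_{k}$. The plan is to perturb \emph{two} layers with independent small scales: the bottleneck layer $\hat W_{k}$ (scale $a$) and one downstream layer $\hat W_{m}$, $k<m\le L$ (scale $\varepsilon$), arranged so that the pure‑$a$ and pure‑$\varepsilon$ contributions are annihilated, leaving a surviving mixed term of order $a\varepsilon$. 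Since $\hat A_{k-1}$ has full column rank I can choose the $\hat W_{k}$‑perturbation so that $\hat A_{k}$ is moved by $aC_{0}$ for an arbitrary matrix $C_{0}$ of my choosing; and I take the $\hat W_{m}$‑perturbation $\varepsilon D$ with $D\hat A_{m-1}=0$. A direct expansion (using $\hat B_{m}\hat W_{m}\hat W_{m-1}\cdots\hat W_{k+1}=\hat B_{k}$ and $\hat W_{m-1}\cdots\hat W_{k+1}\hat A_{k}=\hat A_{m-1}$) gives the new product
\[
\hat A+a\,\hat B_{k}C_{0}+a\varepsilon\,\hat B_{m}D\hat M C_{0},\qquad \hat M:=\hat W_{m-1}\cdots\hat W_{k+1}\ \ (\hat M:=\mathrm{Id}\text{ if }m=k+1),
\]
the $\varepsilon$‑term vanishing because $D\hat A_{m-1}=0$, and $\langle\nabla f(\hat A),\hat B_{k}C_{0}\rangle=0$ because $\hat A_{k-1}$ having full column rank upgrades the $\ell=k$ identity to $\hat B_{k}^{T}\nabla f(\hat A)=0$. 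Hence $f$ of the new product equals $f(\hat A)+a\varepsilon\,\langle\nabla f(\hat A),\hat B_{m}D\hat M C_{0}\rangle+q$, where the perturbation of the product is $\Theta(\lvert a\rvert)$ for fixed $\varepsilon$, so the $C^{1}$ remainder is $q=\lvert a\rvert\cdot o(1)_{a\to0}$. Fixing $\varepsilon$ small, choosing the sign of $a$, and letting $a\to0$, the $\Theta(\lvert a\rvert)$ mixed term dominates the remainder and $f$ strictly decreases arbitrarily close to $(\hat W_{1},\dots,\hat W_{L})$ — contradicting local minimality, \emph{provided} we can make $\langle\nabla f(\hat A),\hat B_{m}D\hat M C_{0}\rangle\ne0$ with $D\hat A_{m-1}=0$.

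It remains to secure that nonzero coefficient. Rewriting it as $\langle\hat B_{m}^{T}\nabla f(\hat A)\,C_{0}^{T}\hat M^{T},D\rangle$ and testing against rank‑one $D=\qq\pp^{T}$ with $\pp\in(\mathrm{col}\,\hat A_{m-1})^{\perp}$ (which forces $D\hat A_{m-1}=0$), one checks, using that $C_{0}$ is free, that a nonzero value exists as soon as $\hat B_{m}^{T}\nabla f(\hat A)\ne0$ and $\mathrm{col}\,\hat M\not\subseteq\mathrm{col}\,\hat A_{m-1}$. For $m=k+1$ the second condition is automatic ($\hat M=\mathrm{Id}$ and $\mathrm{col}\,\hat A_{k}$ is proper in $\R^{d_{k}}$ by the rank deficiency), so this choice works whenever $\hat B_{k+1}^{T}\nabla f(\hat A)\ne0$; otherwise one walks $m$ downstream — at worst to $m=L$, where $\hat B_{L}=\mathrm{Id}$ and $\hat B_{L}^{T}\nabla f(\hat A)=\nabla f(\hat A)\ne0$ by assumption. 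The delicate point I expect to be the main obstacle is exactly this last choice: one must show the mixed $a\varepsilon$‑term can always be aimed non‑orthogonally to $\nabla f(\hat A)$, which is where the rank deficiency $\mathrm{rank}\,\hat A_{k}<d_{0}$ and the width bound $d_{k}\ge d_{0}$ enter (they supply the complementary directions $(\mathrm{col}\,\hat A_{k})^{\perp}$ and $(\mathrm{col}\,\hat A_{m-1})^{\perp}$ needed to inject a fresh rank‑one component and carry it through the network), possibly after re‑normalizing the $\hat W_{\ell}$ into a canonical form to keep the intermediate product $\hat M$ from collapsing; keeping the pure‑$\varepsilon$ term identically zero ($D\hat A_{m-1}=0$) is what prevents the $C^{1}$ error from swamping the $\Theta(a\varepsilon)$ signal.
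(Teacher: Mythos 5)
Your reduction to $d_{L}\ge d_{0}$, the exact first-order identities $\hat B_{\ell}^{T}\nabla f(\hat A)\hat A_{\ell-1}^{T}=0$, the easy case where $\hat A_{L-1}$ has trivial kernel, and the choice of the first rank-deficient partial product $\hat A_{k}$ all match the paper. The divergence — and the gap — is in the main case: you try to manufacture a descent direction by perturbing exactly \emph{two} layers (the bottleneck layer $k$ and one downstream layer $m$) and extracting a surviving mixed $a\varepsilon$ term. Your own analysis shows this requires a single $m\in\{k+1,\dots,L\}$ satisfying both $\hat B_{m}^{T}\nabla f(\hat A)\neq 0$ and $\mathrm{col}\,\hat M\not\subseteq\mathrm{col}\,\hat A_{m-1}$, and no such $m$ need exist. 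Concretely, take $L=3$, all widths equal to $2$, $\mathrm{rank}\,\hat W_{1}=1$, and $\hat W_{2}=\hat W_{3}=0$, so $k=1$: for $m=2$ you have $\hat B_{2}=\hat W_{3}=0$, killing your first condition, and for $m=3$ you have $\hat M=\hat W_{2}=0$, so $\mathrm{col}\,\hat M=\{\0\}=\mathrm{col}\,\hat A_{2}$, killing the second. Worse, at this configuration \emph{every} perturbation of only two of the three layers either leaves the product identically $\0$ or (for layers $2$ and $3$) moves it only inside the range of $X\mapsto X\hat W_{1}$, on which $\nabla f(\0)$ may vanish even though $\nabla f(\0)\neq\0$; one genuinely must move all three layers to make the linear term of $f$ see the change in the product (which then occurs at third order in the perturbation sizes). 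So the ``delicate point'' you flag is not a technicality to be patched by re-normalizing the weights: a perturbation confined to two layers is structurally insufficient, and this is where the proposal breaks.

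The paper closes exactly this gap with a different mechanism. Instead of one second-order descent step, it perturbs \emph{all} layers $k_{*}+1,\dots,L$ simultaneously by rank-one updates $\delta_{j}\ww_{j}\hat\uu_{j-1}^{T}$, where $\hat\uu_{j-1}$ is a left singular vector of $\hat W_{j-1,-}$ associated with a zero singular value (this exists because $d_{j-1}\ge d_{0}$ and $\hat W_{j-1,-}$ is rank-deficient for $j-1\ge k_{*}$); the condition $\hat\uu_{j-1}^{T}\hat W_{j-1,-}=\0$ is precisely your constraint $D\hat A_{m-1}=0$, but imposed at every downstream layer at once, so the full product is left \emph{unchanged}. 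Each perturbed tuple is therefore still a local minimizer, hence satisfies the first-order conditions, and differentiating those conditions with respect to $\delta_{k_{*}+1},\dots,\delta_{L}$ one at a time peels off one factor per induction step until $\nabla f(\hat A)=0$ falls out. Your two-layer construction is essentially the first step of that induction; to complete the proof you would need to let all $L-k_{*}$ downstream layers move together, at which point you are reproducing the paper's argument.
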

Theorem 1 follows as a simple consequence of this theorem. The first hypothesis (i) of theorem 1 shows that the total loss \eqref{loss} takes the form
$$
\Loss(W_1,\ldots, W_L)= f(W_L \cdots W_1 )
$$
for $f(A)$ some  convex and differentiable function. The structural hypothesis \eqref{struct} is equivalent to the second hypothesis (ii) of theorem 1, and so we can directly apply theorem \ref{main_theorem} to conclude that a local minimum $\big(\hat W_{1}, \ldots , \hat W_{L}\big)$ of $\Loss$ corresponds to a critical point $\hat A = \hat W_{L} \cdots \hat W_{1}$ of $f(A)$, and since $f(A)$ is convex, this critical point is necessarily a global minimum. 

Before turning to the proof of theorem \ref{main_theorem} we recall a bit of notation and provide a calculus lemma. Let
\begin{align*}
\langle A , B \rangle_\fro &:= \Tr (A^TB)  = \sum_i \sum_j A_{ij} B_{ij}  \quad \text{and} \\
\|A\|_\fro^2 &:= \langle A , A \rangle_\fro
\end{align*}
denote the Frobenius dot product and the Frobenius norm, respectively. Also, recall that for a differentiable function $\phi: \M_{m \times n}  \mapsto \real$ its gradient $\nabla \phi(A) \in \M_{m \times n}$ is the unique matrix so that the equality
\begin{equation} \label{derivative_def}
\phi(A+H)= \phi(A) + \langle \nabla \phi(A) , H \rangle_\fro + o\left( \|H\|_\fro \right)
\end{equation}
holds. If $F(W_1,\ldots,W_L) := f(W_L\cdots W_1)$ denotes the objective of interest in (P2) the following lemma gives the partial derivatives of $F$ as a function of its arguments. 
\begin{lemma} \label{derivatives}
The partial derivatives of $F$ are given by
 \begin{align*}
&\nabla_{W_1} F(W_1, \ldots, W_L)=  W^{T}_{2,+}\nabla f\big(  A\big), \\
&\nabla_{W_k} F(W_1, \ldots, W_L) = W^T_{k+1,+} \nabla f(A)  W^T_{k-1,-}, \\
&\nabla_{W_L} F(W_1, \ldots, W_L) =\nabla f\big(  A\big)  W^{T}_{L-1,-},
\end{align*}
where $A$ stands for the full product $A:=W_L \cdots W_1$ and $W_{k,+},W_{k,-}$ are the truncated products 
\begin{align}
 W_{k,+} &:=  W_{L} \cdots  W_{k}, \nonumber \\ 
 W_{k,-} &:=  W_{k}\cdots W_{1}  \label{notation22}.
\end{align}
\end{lemma}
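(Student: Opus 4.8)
The plan is to compute each partial derivative directly from the first-order characterization \eqref{derivative_def} of the gradient. To isolate $\nabla_{W_k} F$ I would perturb the single argument $W_k$ by a matrix $H \in \M_{d_k \times d_{k-1}}$ while holding every other factor fixed, and then read off the coefficient of the linear term in $H$. The central observation is that the full product depends on this perturbation in an affine fashion: expanding
$$W_L \cdots W_{k+1}(W_k + H) W_{k-1} \cdots W_1 = A + W_{k+1,+}\, H\, W_{k-1,-},$$
where $A = W_L \cdots W_1$ and $W_{k+1,+}, W_{k-1,-}$ are the truncated products from \eqref{notation22}. There is no higher-order contribution precisely because $H$ occurs only once inside the product.

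Next I would feed this affine expression into the first-order expansion of $f$. Writing $P := W_{k+1,+}$ and $Q := W_{k-1,-}$, equation \eqref{derivative_def} gives
$$f(A + P H Q) = f(A) + \langle \nabla f(A), P H Q \rangle_\fro + o\big(\|P H Q\|_\fro\big).$$
Since $P$ and $Q$ are fixed, submultiplicativity of the Frobenius norm yields $\|P H Q\|_\fro \le \|P\|_\fro\,\|Q\|_\fro\,\|H\|_\fro$, so the remainder is genuinely $o(\|H\|_\fro)$ and the inner-product term is exactly the first-order part we seek.

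The remaining step is to rewrite $\langle \nabla f(A), P H Q \rangle_\fro$ as an inner product paired against $H$ alone, which by the defining property \eqref{derivative_def} then identifies the gradient. Here I would invoke the adjoint identity $\langle M, P H Q \rangle_\fro = \langle P^T M Q^T, H \rangle_\fro$, which follows immediately from cyclic invariance of the trace: $\Tr(M^T P H Q) = \Tr\big((P^T M Q^T)^T H\big)$. Applying this with $M = \nabla f(A)$ produces $\nabla_{W_k} F = W_{k+1,+}^T\, \nabla f(A)\, W_{k-1,-}^T$, the general middle formula.

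Finally I would treat the two boundary layers, which are the only place requiring care. The truncated products degenerate at the ends: perturbing $W_1$ leaves no factor to its right, so one takes $Q = \I$ and obtains $\nabla_{W_1} F = W_{2,+}^T\, \nabla f(A)$, while perturbing $W_L$ leaves no factor to its left, so one takes $P = \I$ and obtains $\nabla_{W_L} F = \nabla f(A)\, W_{L-1,-}^T$. I expect no genuine difficulty in this argument; the only thing to monitor is the consistent bookkeeping of which truncated product survives at each end, together with the orientation of the transposes coming from the adjoint identity.
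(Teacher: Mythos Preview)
Your argument is correct and matches the paper's proof essentially line for line: perturb $W_k$ affinely, apply the first-order expansion \eqref{derivative_def} of $f$, and use the cyclic trace identity to move $W_{k+1,+}$ and $W_{k-1,-}$ onto $\nabla f(A)$. Your explicit treatment of the remainder bound and the boundary cases $k=1,L$ is slightly more detailed than the paper's, but the approach is the same.
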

\begin{proof}
The definition \eqref{derivative_def} implies
\begin{align*}
& F(W_1,\ldots,W_{k-1},W_k + H,W_{k+1},\ldots,W_{L})  \\
&=f\big(A + W_{k+1,+} H W_{k-1,-}\big)  \\
&=f(A) + \langle \nabla f(A) , W_{k+1,+} H W_{k-1,-} \rangle_{{\rm fro}} + o\big( \|H\|_{ {\rm fro}}\big).
\end{align*}
Using  the cyclic property $\Tr(ABC) = \Tr(CAB)$  of the trace then gives
\begin{align*}
\langle \nabla  f(A) \; &, \;  W_{k+1,+} H W_{k-1,-} \; \rangle_{{\rm fro}}  \\
& = \Tr\left(  \; \nabla f(A)^T  W_{k+1,+} H W_{k-1,-} \; \right) \\
& = \Tr\left(  \;  W_{k-1,-} \nabla f(A)^T  W_{k+1,+} H \; \right) \\
& = \langle \;  W^{T}_{k+1,+}\nabla f(A)W^{T}_{k-1,-}  \; ,  \;H \;\rangle_{{\rm fro}}
\end{align*}
which,  in light of \eqref{derivative_def}, gives the desired formula for $\nabla_{W_k} F$. The formulas for $\nabla_{W_1} F$ and $\nabla_{W_L} F$ are obtained similarly.
\end{proof}

\begin{proof}[{\bf Proof of Theorem 3:}]
To prove theorem \ref{main_theorem} it suffices to assume that $d_{L} \geq d_{0}$ without loss of generality. This follows from the simple observation that 
$$
g\big( A \big) := f\big( A^{T} \big)
$$
defines a differentiable function of $d_{0} \times d_{L}$ matrices for $f(A)$ any differentiable function of $d_{L} \times d_{0}$ matrices. As a point $\big(W_{1},\ldots,W_{L}\big)$ defines a local minimum of $f\big( W_{L} W_{L-1}\cdots W_{1} \big)$ if and only if $\big(W^{T}_{1}, \ldots, W^{T}_{L}\big)$ defines a minimum of $g\big( V_{1} \cdots V_{L-1}  V_{L} \big),$ the theorem for the case $d_{L} < d_{0}$ follows by appealing to its $d_{L} \geq d_{0}$ instance. It therefore suffices to assume that $d_{L} \geq d_{0},$ and by the structural assumption that $d_{k} \geq d_0$, throughout the remainder of the proof.

Consider any local minimizer $\big(\hat W_{1},\ldots,\hat W_{L}\big)$ of $F$ and denote by $\hat A$, $\hat W_{k,+}$ and $\hat W_{k,-}$  the corresponding full and truncated products (c.f. \eqref{notation22}). By definition of a local minimizer there exists some $\veps_0 > 0$ so that
\begin{equation} \label{loc_min}
F(W_1, \ldots, W_L) \geq F(\hat W_1, \ldots, \hat W_L) =  f\big( \hat A \big)
\end{equation}
whenever the family of inequalities 
$$\| {W}_{\ell} - \hat W_{\ell}\|_{ \mathrm{fro} } \leq \veps_0 \quad \text{for all} \quad 1 \leq \ell \leq L$$
all hold.  Moreover, lemma \ref{derivatives} yields
\begin{align}\label{eq:foopt}
(\mathrm{i})& \;\; 0 = \hat W^{T}_{2,+}\nabla f\big( \hat A\big), \nonumber \\
(\mathrm{ii})& \;\; 0 = \hat W^{T}_{k+1,+}\nabla f\big( \hat A\big)\hat W^{T}_{k-1,-} \quad \forall \; 2 \leq k \leq L-1, \nonumber \\
(\mathrm{iii})& \;\; 0 = \nabla f\big( \hat A\big) \hat W^{T}_{L-1,-}.
\end{align}
since all partial derivatives must vanish at a local minimum. If $\hat{W}_{L-1,-}$ has a trivial kernel, i.e. $\mathrm{ker}(\hat{W}_{L-1,-})=\{\0\}$, then the theorem follows easily. The critical point condition \eqref{eq:foopt} part (iii) implies 
$$\hat{W}_{L-1,-}\nabla f\big( \hat A \big )^{T} = 0,$$
and since $\hat{W}_{L-1,-}$ has a trivial kernel this implies $\nabla f\big( \hat{A} \big) =\nabla f\big( \hat W_{L}\hat W_{L-1}\cdots \hat W_{1} \big) = 0$ as desired.

The remainder of the proof concerns the case that $\hat{W}_{L-1,-}$ has a nontrivial kernel. The main idea  is to use this nontrivial kernel to construct a family of infinitesimal perturbations of the local minimizer  $\big(\hat W_{1},\ldots, \hat W_{L}\big)$ that leaves the overall product $\hat W_L \cdots \hat W_1$ unchanged. In other words, the family of perturbations $\big(\tilde W_{1},\ldots, \tilde W_{L}\big)$ satisfy
\begin{align}
&\| \tilde W_\ell - \hat W_\ell \|_\fro \le \epsilon_0/2  \quad \forall \ell = 1, \ldots, L  \label{zizi1}, \\
&\tilde W_L \tilde W_{L-1} \cdots \tilde W_1 = \hat W_L \hat W_{L-1} \cdots \hat W_1 \label{zizi2}.
\end{align}
Any such perturbation also defines a local minimizer.
\begin{claim}\label{claim:local}
Any tuple of matrices $\big(\tilde W_{1},\ldots, \tilde W_{L}\big)$ satisfying \eqref{zizi1} and \eqref{zizi2} is necessarily a local minimizer $F$.
\end{claim}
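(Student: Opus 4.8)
The plan is to deduce the claim directly from the definition of a local minimizer together with the triangle inequality; no new perturbation construction or calculus is needed at this step. The point is that condition \eqref{zizi2} forces $F$ to take the \emph{same value} at $\big(\tilde W_1, \ldots, \tilde W_L\big)$ as at the original minimizer $\big(\hat W_1, \ldots, \hat W_L\big)$, namely
$$
F\big(\tilde W_1, \ldots, \tilde W_L\big) = f\big( \tilde W_L \cdots \tilde W_1 \big) = f\big( \hat W_L \cdots \hat W_1 \big) = f\big(\hat A\big),
$$
while condition \eqref{zizi1} guarantees that $\big(\tilde W_1, \ldots, \tilde W_L\big)$ sits comfortably inside the neighborhood on which $\big(\hat W_1, \ldots, \hat W_L\big)$ is already known to be optimal.

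Concretely, I would fix an arbitrary tuple $\big(W_1, \ldots, W_L\big)$ with $\| W_\ell - \tilde W_\ell \|_\fro \le \veps_0/2$ for every $\ell$. Applying the triangle inequality and \eqref{zizi1} gives $\| W_\ell - \hat W_\ell \|_\fro \le \| W_\ell - \tilde W_\ell \|_\fro + \| \tilde W_\ell - \hat W_\ell \|_\fro \le \veps_0$ for every $\ell$, so the hypothesis of \eqref{loc_min} is satisfied and therefore $F\big(W_1, \ldots, W_L\big) \ge f\big(\hat A\big)$. Combining this with the displayed equality above yields $F\big(W_1, \ldots, W_L\big) \ge F\big(\tilde W_1, \ldots, \tilde W_L\big)$. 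Since $\big(W_1, \ldots, W_L\big)$ was an arbitrary tuple within Frobenius distance $\veps_0/2$ of $\big(\tilde W_1, \ldots, \tilde W_L\big)$ coordinatewise, this exhibits $\big(\tilde W_1, \ldots, \tilde W_L\big)$ as a local minimizer of $F$ (with radius $\veps_0/2$), which is the claim.

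I do not anticipate any real obstacle here: the argument is purely soft and only relies on the deliberate choice of the radius $\veps_0/2$ in \eqref{zizi1} as one half of the radius $\veps_0$ appearing in \eqref{loc_min}. The genuine difficulty lies elsewhere in the proof of Theorem~\ref{main_theorem} --- namely in constructing perturbations $\big(\tilde W_1, \ldots, \tilde W_L\big)$ that actually satisfy \eqref{zizi1}--\eqref{zizi2} by exploiting the nontrivial kernel of $\hat W_{L-1,-}$, and in then using the first-order conditions \eqref{eq:foopt} at these perturbed minimizers to force $\nabla f\big(\hat A\big)=0$. Claim~\ref{claim:local} merely records the harmless fact that sliding the minimizer along the fiber $\{(W_1,\ldots,W_L) : W_L \cdots W_1 = \hat A\}$ preserves local minimality, which is exactly what licenses that later argument.
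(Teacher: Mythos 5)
Your argument is correct and coincides with the paper's own proof: both use the triangle inequality with the radius $\veps_0/2$ from \eqref{zizi1} to place any competitor inside the $\veps_0$-neighborhood of $\big(\hat W_1,\ldots,\hat W_L\big)$, then invoke \eqref{zizi2} to identify $F\big(\tilde W_1,\ldots,\tilde W_L\big)$ with $f\big(\hat A\big)$. Nothing is missing.
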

\begin{proof}
 For any matrix $W_{\ell}$ satisfying $\|W_{\ell} - \tilde W_{\ell}\|_{ \mathrm{fro} } \leq \veps_0/2$, inequality \eqref{zizi1} implies that
\begin{align*}
\|W_{\ell} - \hat W_{\ell}\|_{ \mathrm{fro} } &\leq \|W_{\ell} - \tilde W_{\ell}\|_{ \mathrm{fro} } + \|\tilde W_{\ell} - \hat W_{\ell}\|_{ \mathrm{fro} }  \leq \veps_0
\end{align*}
Equality \eqref{zizi2} combined to 
  \eqref{loc_min} then leads to
\begin{multline*}
F\big( W_{1}, \ldots, W_{L}\big) \geq f\big( \hat A \big) = f(\hat W_L \cdots \hat W_1 ) \\
= f(\tilde W_L \cdots \tilde W_1 ) = F\big( \tilde{W}_1, \ldots, \tilde{W}_L \big)
\end{multline*}
for any $W_{\ell}$ with $\|W_{\ell} - \tilde W_{\ell}\|_{ \mathrm{fro} } \leq \veps_0/2$ and so the point $(\tilde W_{1},\ldots,\tilde W_L)$ defines a local minimum. 
\end{proof}

The construction of such perturbations requires a preliminary observation and then an appeal to the singular value decomposition. Due to the definition of $\hat W_{k,-}$ it follows that $\ker(\hat W_{k+1,-}) = \ker( \hat W_{k+1} \hat W_{k,-}) \supseteq \ker( \hat W_{k,-} ),$ and so the chain of inclusions
\begin{equation} \label{lala}
\mathrm{ker}(\hat{W}_{1,-}) \subseteq \mathrm{ker}(\hat{W}_{2,-}) \subseteq \cdots \subseteq \mathrm{ker}(\hat{W}_{L-1,-})
\end{equation}
holds. Since $\hat{W}_{L-1,-}$ has a  nontrivial kernel, the chain of inclusions \eqref{lala} implies that there exists
 $k_* \in \{1, \ldots, L-1\}$ such that
\begin{align}\label{eq:k*}
 &\mathrm{ker}(\hat{W}_{k,-})=\{\0\}  \quad \text{if } k < k_* \\
 &\mathrm{ker}(\hat{W}_{k,-})\neq \{\0\} \quad   \text{if } k \ge k_*  \label{eq:kk}
\end{align}
In other words, $\hat{W}_{k_*,-}$ is the first matrix appearing in \eqref{lala} that has a nontrivial kernel.

The structural requirement \eqref{struct} and the assumption that $d_{L} \geq d_{0}$ imply that $d_k \ge d_0$ for all $k$, and so the matrix $\hat W_{k,-} \in \M_{d_k \times d_0}$ has more rows than columns. As a consequence its full singular  value decomposition
\begin{equation}\label{eq:svd}
\hat W_{k,-} = \hat U_k \hat \Sigma_k \hat V^{T}_k
\end{equation}
has the shape depicted in figure \ref{fig:svd}. The matrices $\hat U_k \in \M_{d_k \times d_k}$ and $\hat V_k \in \M_{d_0 \times d_0}$ are   orthogonal, whereas $\hat \Sigma_{k} \in \M_{d_k \times d_0}$ is a diagonal matrix containing the singular values of $\hat W_{k,-}$ in descending order.  From \eqref{eq:kk} $\hat W_{k,-}$ has a nontrivial kernel for all $k\geq k_*,$ and so in particular its least singular value is zero. In particular, the $(d_0,d_0)$ entry of $\hat \Sigma_k$ vanishes if $k \ge k_*$. Let  $\hat \uu_{k}$ denote the corresponding $d_0^{ {\rm th}}$ column of $\hat U_k$, which exists since $d_k \ge d_0$.

\begin{claim}\label{claim:pert}
Let $\ww_{k_*+1},\ldots,\ww_{L}$ denote any collection of vectors and  $\delta_{k_*+1},\ldots,\delta_{L}$  any collection of scalars satisfying
\begin{align}
&\ww_k \in \real^{d_k}, \quad \|\ww_k\|_2=1 \quad \text{and} \label{caca1}  \\ 
& 0 \leq \delta_{k} \leq \epsilon_0/2 \label{caca2}  
\end{align}
for all $k_*+1 \le k \le L$. Then the tuple of matrices $(\tilde W_1,\ldots, \tilde W_L)$ defined by
\begin{equation}\label{eq:newpts}
\tilde{W}_{k} := \begin{cases}
\hat W_{k} & \text{if} \quad 1 \leq k \leq k_*\\
\hat W_{k} + \delta_{k} \ww_{k} \hat \uu^{T}_{k-1}& \text{else},
\end{cases}
\end{equation}
satisfies \eqref{zizi1} and $\eqref{zizi2}$.
\end{claim}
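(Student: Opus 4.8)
The plan is to check the two conditions \eqref{zizi1} and \eqref{zizi2} in turn, the second being the one that carries content. The bound \eqref{zizi1} is immediate from the definition \eqref{eq:newpts}: when $1 \le \ell \le k_*$ the matrix $\tilde W_\ell$ equals $\hat W_\ell$ and the difference vanishes, while when $k_*+1 \le \ell \le L$ the difference $\tilde W_\ell - \hat W_\ell = \delta_\ell \ww_\ell \hat \uu^T_{\ell-1}$ is rank one with
\[
\|\tilde W_\ell - \hat W_\ell\|_\fro = \delta_\ell \, \|\ww_\ell\|_2 \, \|\hat \uu_{\ell-1}\|_2 = \delta_\ell \le \epsilon_0/2,
\]
since $\hat\uu_{\ell-1}$ is a column of the orthogonal matrix $\hat U_{\ell-1}$ and hence has unit norm, and using \eqref{caca1}--\eqref{caca2}.

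For \eqref{zizi2} the crucial point is the orthogonality relation
\[
\hat \uu_k^T \, \hat W_{k,-} = \0 \qquad \text{for every } k \ge k_*.
\]
This follows from the full SVD \eqref{eq:svd}: writing $\hat\uu_k^T \hat W_{k,-} = \hat\uu_k^T \hat U_k \hat\Sigma_k \hat V_k^T$, the vector $\hat\uu_k^T \hat U_k$ is the $d_0^{\rm th}$ coordinate row vector, so $\hat\uu_k^T \hat U_k \hat\Sigma_k$ equals the $d_0^{\rm th}$ row of the rectangular diagonal matrix $\hat\Sigma_k$; that row has its only possibly nonzero entry at position $(d_0,d_0)$, which vanishes for $k \ge k_*$ because $\hat W_{k,-}$ then has a nontrivial kernel by \eqref{eq:kk}.

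With this relation in hand I will prove, by induction on $k$, that the truncated products of the perturbed tuple coincide with those of the original point, i.e. $\tilde W_k \tilde W_{k-1}\cdots \tilde W_1 = \hat W_{k,-}$ for all $1 \le k \le L$; the case $k=L$ is exactly \eqref{zizi2}. For $k \le k_*$ this holds because every factor is unchanged. For the step with $k_*+1 \le k \le L$, factor out the outermost matrix,
\[
\tilde W_k \tilde W_{k-1}\cdots \tilde W_1 = \big(\hat W_k + \delta_k \ww_k \hat\uu_{k-1}^T\big)\big(\tilde W_{k-1}\cdots \tilde W_1\big),
\]
replace $\tilde W_{k-1}\cdots \tilde W_1$ by $\hat W_{k-1,-}$ using the inductive hypothesis, and observe that $\delta_k \ww_k \hat\uu_{k-1}^T \hat W_{k-1,-} = \0$ by the orthogonality relation (valid since $k-1 \ge k_*$), which leaves $\hat W_k \hat W_{k-1,-} = \hat W_{k,-}$.

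Nothing here requires machinery beyond the SVD, and once the objects $k_*$ (from \eqref{eq:k*}--\eqref{eq:kk}), the decompositions \eqref{eq:svd}, and the singular vectors $\hat\uu_k$ are fixed, the argument is essentially bookkeeping. The one genuinely load-bearing design choice — and the only place where care is needed — is that the perturbation attached to layer $k$ is built from the left singular vector $\hat\uu_{k-1}$ of the \emph{previous} partial product $\hat W_{k-1,-}$, so that it is annihilated when multiplied on the right by that product; this is precisely what makes the induction close and the overall product invariant.
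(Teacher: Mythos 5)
Your proof is correct and follows essentially the same route as the paper: the identical Frobenius-norm computation for \eqref{zizi1}, and for \eqref{zizi2} the same induction on the truncated products, closed by the observation that $\hat\uu_k^T \hat W_{k,-} = \mathbf{e}_{d_0}^T \hat\Sigma_k \hat V_k^T = \0$ for $k \ge k_*$ since the $(d_0,d_0)$ entry of $\hat\Sigma_k$ vanishes. The only cosmetic difference is that you isolate the orthogonality relation as a standalone preliminary step before the induction, which the paper instead verifies inline.
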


 \begin{figure}[t]
\begin{center}
\begin{tikzpicture}
\node [above] at (0.5,2.4) {\scriptsize{$d_0$}};
\draw[<->] (0,2.4) -- (1,2.4);
\draw[<->] (-0.4,0.5) -- (-0.4,2);
\node [left] at (-0.4,1.25) {\scriptsize{$d_k$}};
\draw[fill=gray!30!white] (0,0.5) rectangle (1,2); 
\node [above] at (1.5,1) {$=$};
\draw[fill=gray!30!white] (2,0.5) rectangle (3.5,2); 
\draw (4.3-0.5,0.5) rectangle (5.3-0.5,2); 
\fill[gray!30!white] (4.3-0.5,1.9) -- (4.3-0.5,2) -- (4.4-0.5,2)    --   (5.3-0.5,1.1)  -- (5.3-0.5,1) -- (5.2-0.5,1)   -- cycle;
\node [red,above] at (5.2-0.5,0.9) {\scriptsize{$0$}};
\draw[fill=gray!30!white] (5.6-0.5,1) rectangle (6.6-0.5,2); 
\node [below] at (0.5,-0.2+0.5) {$\hat W_{k,-}$};
\node [below] at (3,-0.2+0.5) {$\hat U_{k}$};
\node [below] at (4.8-0.5,-0.2+0.5) {$\hat \Sigma_{k}$};
\node [below] at (6.1-0.5,-0.2+0.5) {$\hat V_{k}^T$};
\draw[red,thick](3,2)--(3,1.4) ;
\draw[red,thick](3,1.05)--(3,0.5) ;
\node [red,below] at (3,1.45) {\scriptsize{$\hat  {\bf u}_{k}$}};
 \node[anchor=east] at (3.12,2) (text) {};
  \node[anchor=west, red] at (3.3,2.7) (description) {\scriptsize{$d_0^{ {\rm th}}$ column}};
  \draw[->,red] (description) edge[out=180,in=90,->] (text);
\end{tikzpicture}
\end{center}
\caption{Full singular value decomposition of $\hat W_{k,-} \in \M_{d_k \times d_0}$. If $k \ge k_*$ then $\hat W_{k,-}$ does not have full rank and so the $(d_0,d_0)$ entry of $\hat \Sigma_k$ is $0$. The  $d_0^{{\rm th}}$ column of $\hat U_k$ exists since $d_k \ge d_0$.}
\label{fig:svd}
\end{figure}
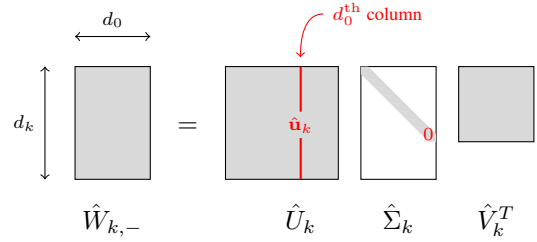

\begin{proof}
Inequality \eqref{zizi1} follows from the fact that
$$\| \tilde W_k - \hat W_k \|_\fro = \delta_{k} \| \ww_{k} \hat \uu^{T}_{k-1} \|_\fro =  \delta_{k}  \| \ww_{k} \|_2  \|  \hat \uu_{k-1} \|_2$$
for all  $k> k^*$, together with the fact that $\hat \uu_{k-1} $ and $\ww_{k}$ are unit vectors and that $0 \le \delta_{k} \le \epsilon_0/2$.
 
 To prove \eqref{zizi2} let
$\tilde W_{k,-}  =\tilde W_k \cdots \tilde W_1$ and  $  \hat W_{k,-}=  \hat W_k \cdots \hat W_1$ denote the truncated products of the matrices $\tilde W_k$ and $\hat W_k$. The equality $\tilde W_{k_*,-}=   \hat W_{k_*,-}$ is immediate from the definition \eqref{eq:newpts}. The equality \eqref{zizi2} will then follow from showing that
\begin{equation}\label{eq:inducc}
\tilde W_{k,-}=   \hat W_{k,-} \quad \text{for all} \quad k_* \le k \le L.
\end{equation}
Proceeding by induction, assume that $\tilde W_{k,-}=   \hat W_{k,-}$ for a given $k \ge k_*$. Then
\begin{align*}
\tilde W_{k+1,-}&= \tilde W_{k+1} \tilde W_{k,-} \\
&=\tilde W_{k+1} \hat W_{k,-}  \quad \text{(induction hypothesis)}\\
&=  \left( \hat W_{k+1} + \delta_{k+1} \ww_{k+1} \hat \uu^{T}_{k} \right) \hat W_{k,-}  \\
&=   \hat W_{k+1,-} + \delta_{k+1} \ww_{k+1}  \uu_{k}^T  \hat W_{k,-}   
\end{align*}
The second term of the last line vanishes, since
$$
\uu_{k}^T  \hat W_{k,-}=  \uu_{k}^T\hat U_k \hat \Sigma_k \hat V^{T}_k = 
\mathbf{e}^{T}_{d_0} \hat \Sigma_{k} \hat V^{T}_{k}=\0
$$
with $\mathbf{e}_{d_0} \in \R^{d_{k}}$ the $d_0^{ {\rm th}}$ standard basis vector. The second equality comes from the fact that the columns of $\hat U_{k}$ are orthonormal, and  the last equality comes from the fact that $\mathbf{e}_{d_0}^T \Sigma_{k_*}=\0$  since the $d_0^{{\rm th}}$ row of $\hat \Sigma_{k_*}$ vanishes. Thus \eqref{eq:inducc} holds, and so \eqref{zizi2} holds as well.
\end{proof}

Claims \ref{claim:local} and  claim \ref{claim:pert} show that the perturbation  $\big(\tilde W_{1},\ldots,\tilde W_{L}\big)$ defined  by \eqref{eq:newpts} is  a local minimizer of $F$. The critical point conditions
\begin{align*}
(\mathrm{i})& \;\; 0 = \tilde W^{T}_{2,+}\nabla f\big( \tilde A\big), \nonumber \\
(\mathrm{ii})& \;\; 0 = \tilde W^{T}_{k+1,+}\nabla f\big( \tilde A\big)\tilde W^{T}_{k-1,-} \quad \forall \; 2 \leq k \leq L-1, \nonumber \\
(\mathrm{iii})& \;\; 0 = \nabla f\big( \tilde A\big) \tilde W^{T}_{L-1,-}
\end{align*}
therefore hold as well {\bf for all choices of $\ww_{k_*+1},\ldots,\ww_{L}$  and  $\delta_{k_*+1},\ldots,\delta_{L}$ satisfying \eqref{caca1} and \eqref{caca2}.}

The proof concludes by appealing to this family of critical point relations. If $k_*>1$ the transpose of  condition (ii) gives
\begin{equation} \label{aaa}
  \hat W_{k_*-1,-} \nabla f\big( \hat A\big)^T \tilde W_{k_*+1,+}=0
\end{equation}
since the equalities $\tilde W_{k_*-1,-} =  \hat W_{k_*-1,-}$ (c.f. \eqref{eq:newpts}) and $\tilde A= \tilde W_L \cdots \tilde W_1 = \hat W_L \cdots \hat W_1 = \hat A$ (c.f. \eqref{zizi2}) both hold. But $\mathrm{ker}(\hat W_{k_*-1,-})= \{\0\}$ by definition of $k_*$ (c.f. \eqref{eq:k*}), and so
\begin{equation} \label{bbb}
\nabla f \big( \hat A \big)^{T}\tilde W_{L}\cdots \tilde W_{k_*+1} = 0.
\end{equation}
must hold as well. If $k_*=1$ then  \eqref{bbb} follows trivially from the critical point condition (i).  Thus \eqref{bbb} holds for all choices of $\ww_{k_*+1},\ldots,\ww_{L}$  and  $\delta_{k_*+1},\ldots,\delta_{L}$ satisfying \eqref{caca1} and \eqref{caca2}. First choose $\delta_{k_*+1} = 0$ so that  $\tilde W_{k_*+1} =  \hat W_{k_*+1}$ and apply   \eqref{bbb} to find
\begin{equation}
\label{ccc}
\nabla f \big( \hat A \big)^{T}\tilde W_{L}\cdots \tilde W_{k_*+2}\hat W_{k_*+1} = 0.
\end{equation}
Then take any $\delta_{k_*+1} >0$ and substract \eqref{ccc} from \eqref{bbb} to get
\begin{align*}
&\frac{1}{\delta_{k_*+1}}
\nabla f \big( \hat A \big)^{T}\tilde W_{L}\cdots  \tilde W_{k_*+2} \left( \tilde W_{k_*+1} -\hat W_{k_*+1} \right)\\
&=\nabla f \big( \hat A \big)^{T}\tilde W_{L}\cdots \tilde W_{k_*+2}\big( \ww_{k_*+1} \hat \uu^{T}_{k_*} \big)  
= 0
\end{align*}
for $\ww_{k_*+1}$ an arbitrary vector with unit length. Right multiplying the last equality by $\hat \uu_{k_*}$ and using the fact that $ (\ww_{k_*+1} \hat \uu^{T}_{k_*})  \hat \uu_{k_*}= \ww_{k_*+1} \hat \uu_{k_*}^T \hat \uu_{k_*}=\ww_{k_*+1} $ shows
\begin{equation} \label{ddd}
\nabla f \big( \hat A \big)^{T}\tilde W_{L}\cdots \tilde W_{k_*+2}\ww_{k_*+1} = 0
\end{equation}
for all choices of $\ww_{k_*+1}$ with unit length. Thus \eqref{ddd} implies
$$
\nabla f \big( \hat A \big)^{T}\tilde W_{L}\cdots \tilde W_{k_*+2} = 0
$$
for all choices of $\ww_{k_*+2},\ldots,\ww_{L}$  and  $\delta_{k_*+2},\ldots,\delta_{L}$ satisfying \eqref{caca1} and \eqref{caca2}. The claim
$$
\nabla f \big( \hat A \big) = 0
$$
then follows by induction. 
\end{proof}

\section{Concluding Remarks}
Theorem 3 provides the mathematical basis for our analysis of deep linear problems. We therefore conclude by discussing its limits.

First, theorem 3 fails if we refer to critical points rather than local minimizers. To see this, it suffices to observe that the critical point conditions for problem (P2),
\begin{align*}
(\mathrm{i})& \;\; 0 = \hat W^{T}_{2,+}\nabla f\big( \hat A\big), \nonumber \\
(\mathrm{ii})& \;\; 0 = \hat W^{T}_{k+1,+}\nabla f\big( \hat A\big)\hat W^{T}_{k-1,-} \quad \forall \; 2 \leq k \leq L-1, \nonumber \\
(\mathrm{iii})& \;\; 0 = \nabla f\big( \hat A\big) \hat W^{T}_{L-1,-}
\end{align*}
where $\hat W_{k,+} := \hat W_{L} \cdots \hat W_{k+1}$ and $\hat W_{k,-} := \hat W_{k-1}\cdots \hat W_{1} $, clearly hold if $L \geq 3$ and all of the $\hat W_{\ell}$ vanish. In other words, the collection of zero matrices always defines a critical point for (P2) but clearly $\nabla f\big( \mathbf{0} \big)$ need not vanish. To put it otherwise, if $L \geq 3$ the problem (P2) always has saddle-points even though all local optima are global.

Second, the assumption that $f(A)$ is differentiable is necessary as well. More specifically, a function  of the form 
$$
F(W_1,\ldots,W_L) := f( W_{L}\cdots W_{1} )
$$
can have sub-optimal local minima if $f(A)$ is convex and globally Lipschitz but is not differentiable. A simple example demonstrates this, and therefore proves theorem 2. For instance, consider the  bi-variate convex function
\begin{equation}\label{eq:thiscex}
f(x,y) := |x| + (1-y)_{+} - 1, \quad (y)_{+} := \max\{y,0\}, 
\end{equation}
which is clearly globally Lipschitz but not differentiable. The set
$$
\argmin \; f := \{ (x,y) \in \R^{2} : x=0, \,y \geq 1\}
$$
furnishes its global minimizers while $f_{ {\rm opt} } = -1$ gives the optimal value. For this function even a two layer   deep linear problem
$$
F\big( W_1 , W_2) := f( W_2 W_1) \quad W_2 \in \R^{2}, \; W_1 \in \R
$$
has sub-optimal local minimizers; the point
\begin{equation}\label{eq:thisthing}
(\hat W_1 , \hat W_2) = \left( 0 , \begin{bmatrix} 1 \\ 0\end{bmatrix} \right)
\end{equation}
provides an example of a sub-optimal solution. The set of all possible points in $\R^{2}$
\begin{align*}
&\mathcal{N}(\hat W_1,\hat W_2) := \\
 &\left\{ W_2W_1  : \| W_2 - \hat W_2\| \leq \frac1{4} , \; \| W_1 - \hat W_1 \| \leq \frac1{4} \right\}
\end{align*}
generated by a $1/4$-neighborhood of the optimum \eqref{eq:thisthing} lies in the two-sided, truncated cone
$$
\mathcal{N}(\hat W_1,\hat W_2) \subset \left\{ (x,y) \in \R^{2} : |x| \leq \frac{1}{2} , |y| \leq \frac1{2}|x| \right\},
$$
and so if we let $x \in \R$ denote the first component of the product $W_2 W_1$ then the inequality
$$
f(W_2 W_1) \geq \frac1{2}|x| \geq 0 = f( \hat W_2 \hat W_1 )
$$
holds on $\mathcal{N}(\hat W_1,\hat W_2)$ and so $(\hat W_1,\hat W_2)$ is a sub-optimal local minimizer. Moreover, the minimizer $(\hat W_1,\hat W_2)$ is a \emph{strict} local minimizer in the only sense in which strict optimality can hold for a deep linear problem. Specifically, the strict inequality 
\begin{equation}\label{eq:thisargument}
f(W_2 W_1) > f( \hat W_2 \hat W_1 )
\end{equation}
holds on $\mathcal{N}(\hat W_1,\hat W_2)$ unless $W_2 W_1 = \hat W_2 \hat W_1 = \0;$ in the latter case $(W_1,W_2)$ and $(\hat W_1,\hat W_2)$ parametrize the same point and so their objectives must coincide. We may identify the underlying issue easily. The proof of theorem \ref{main_theorem} requires a single-valued derivative $\nabla f(\hat A)$ at a local optimum, but with $f(x,y)$ as in \eqref{eq:thiscex} its subdifferential
$$
\partial f(\0) = \{ (x,y) \in \R^{2} : -1 \leq x \leq 1,\, y=0 \}
$$ 
is multi-valued at the sub-optimal local minimum \eqref{eq:thisthing}. In other words, if a globally convex function $f(A)$ induces sub-optimal local minima in the corresponding deep linear problem then $\nabla f( \hat A)$ cannot exist at any such sub-optimal solution (assuming the structural condition, of course).

Third, the structural hypothesis 
$$d_\ell \geq \min\{d_{L},d_{0}\}  \quad \text{for all } \ell \in \{1, \ldots, L \}$$ is necessary for theorem \ref{main_theorem} to hold as well. If $d_{\ell} < \min\{ d_0,d_{L}\}$ for some $\ell$ the parametrization 
$$
A = W_{L}\cdots W_{1}
$$
cannot recover full rank matrices. Let $f(A)$ denote any function where $\nabla f $ vanishes only at full rank matrices. Then 
$$
\nabla f\big ( W_{L}\cdots W_{1}\big) \neq \0
$$
at all critical points of (P2), and so theorem \ref{main_theorem} fails.

Finally, if we do not require convexity of $f(A)$ then it is not true, in general, that local minima of (P2) correspond to minima of the original problem. The functions
$$
f(x,y) = x^{2} - y^2 \qquad F(W_1,W_2) = f(W_2 W_1)
$$
and the minimizer \eqref{eq:thisthing} illustrate this point. While the origin is clearly a saddle point of the one layer problem, the argument leading to \eqref{eq:thisargument} shows that \eqref{eq:thisthing} is a local minimizer for the deep linear problem. So in the absence of additional structural assumptions on $f(A),$ we may infer that a minimizer of the deep linear problem satisfies first-order optimality for the original problem, but nothing more.

\bibliography{bibliography}
\bibliographystyle{icml2018}

\end{document}